\newcommand{\B}{\mathcal{B}} 
\newcommand{\C}{\mathcal{C}} 
\newcommand{\E}{\mathop{\mathbb{E}}} 
\newcommand{\F}{\mathcal{F}} 
\renewcommand{\L}{\mathcal{L}} 
\newcommand{\N}{\mathbb{N}} 
\renewcommand{\P}{\mathcal{P}} 
\newcommand{\R}{\mathbb{R}} 
\newcommand{\subG}{\mathsf{subG}(\eta^2)} 
\newcommand{\X}{\mathcal{X}} 
\newcommand{\Z}{\mathcal{Z}} 
\newcommand{\IID}{\stackrel{IID}{\sim}} 
\renewcommand{\hat}{\widehat} 
\theoremstyle{plain}
\newtheorem{theorem}{Theorem}
\newtheorem{lemma}[theorem]{Lemma}
\theoremstyle{definition}
\newtheorem{remark}[theorem]{Remark}
\newtheorem{definition}[theorem]{Definition}
\begin{document}

%

%

\twocolumn[

\aistatstitle{Continuum-Armed Bandits: A Function Space Perspective}

\aistatsauthor{ Shashank Singh }

\aistatsaddress{ \texttt{shashankssingh44@gmail.com}\\Carnegie Mellon University } ]

\begin{abstract}
    Continuum-armed bandits (a.k.a., black-box or $0^{th}$-order optimization) involves optimizing an unknown objective function given an oracle that evaluates the function at a query point, with the goal of using as few query points as possible. In the most well-studied case, the objective function is assumed to be Lipschitz continuous and minimax rates of simple and cumulative regrets are known in both noiseless and noisy settings.
    This paper studies continuum-armed bandits under more general smoothness conditions, namely Besov smoothness conditions, on the objective function. In both noiseless and noisy conditions, we derive minimax rates under simple and cumulative regrets.
    Our results show that minimax rates over objective functions in a Besov space are identical to minimax rates over objective functions in the smallest H\"older space into which the Besov space embeds.
\end{abstract}

\section{Introduction}

Many computational problems involve optimizing an unknown function of continuous inputs given only the ability to evaluate the function at a point. In settings where the function is expensive (in terms of computation, time, money, or another resource) to evaluate, it is desirable to estimate the global optimum of the function using as few evaluations as possible.
This problem, which we refer to as \emph{continuum-armed bandits} (following \citet{agrawal1995continuum}), is also known as \emph{black-box}~\citep{golovin2017google}, \emph{derivative-free}~\citep{larson2019derivative}, or \emph{$0^{th}$-order}~\citep{golovin2019gradientless} optimization, due to the unavailability of analytic information, including gradients, about the function.

Recent interest in continuum-armed bandits has been fueled by a need to tune hyperparameters of machine learning models and other complex algorithms~\citep{li2017hyperband,kandasamy2018tuning,feurer2019hyperparameter}.
However, the problem is a natural framework for improving many scientific and engineering processes as diverse as scientific experimentation~\citep{nakamura2017design}, taxonomy matching~\citep{pandey2007bandits}, and baking cookies~\citep{kochanski2017dessert}.

To make continuum-armed bandits tractable, most work assumes the ``arms'' (i.e., values being optimized over) lie in a compact set $\X \subset \R^d$ and makes smoothness assumptions on the objective function. For example, much work has focused on the case where the objective function lies in a Lipschitz or H\"older space over $\X$.
However, compared to our understanding of closely related estimation problems such as nonparametric regression, our understanding of the relationship between the assumptions made on the objective function and the difficulty of optimization is limited.

This paper considers a large family of function spaces, called Besov spaces, which include previously considered function spaces, such as H\"older and Sobolev spaces, as special cases. Classical results for nonparametric regression shown that convergence rates over Besov spaces can sometimes be improved by used special ``spatially adaptive'' methods, such as certain wavelet or spline methods, instead of commonly used ``linear'' methods, such as most kernel methods~\citep{donoho1995asymptopia,donoho1998minimax}. Our primary motivation is to investigate whether such an improvement is possible for continuum-armed bandits. To understand this, we prove new lower bounds on minimax convergence rates for continuum-armed bandits over Besov spaces under a variety of conditions. Our lower bounds imply that minimax rates over Besov spaces are identical to those obtained by embedding the Besov space in a H\"older space, and we conclude that, unlike in regression, Besov smoothness provides no advantage over H\"older smoothness, clarifying the nature of the smoothness properties of a function that determine the difficulty of its optimization. Additionally, as we discuss in Section~\ref{sec:related_work}, our results generalize, unify, and strengthen several prior results.

\paragraph{Organization} The remainder of this paper is organized as follows: Section~\ref{sec:notation} provides necessary notation, defines relevant function spaces, and formalizes the optimization problem we study. Section~\ref{sec:related_work} reviews related work, primarily work in the H\"older and reproducing kernel Hilbert space (RKHS) cases. Sections \ref{sec:noiseless_case} and \ref{sec:noisy_case} state and prove our main results, for the noiseless and noisy cases, respectively, and Section~\ref{sec:conclusion} concludes with some discussion.

\section{Notation, Problem Setup, and Technical Background}
\label{sec:notation}

\paragraph{Notation} For non-negative real sequences $\{a_n\}_{n \in \N}$, $\{b_n\}_{n \in \N}$, $a_n \lesssim b_n$ indicates $\limsup_{n \to \infty} \frac{a_n}{b_n} < \infty$, and $a_n \asymp b_n$ indicates $a_n \lesssim b_n$ and $b_n \lesssim a_n$.
For a real number $\sigma$, $\lfloor \sigma \rfloor := \max \{s \in \Z : s < \sigma\}$ denotes the greatest integer strictly less than $\sigma$.
For non-negative integers $k \in \N$, $[k] := \{1,...,k\}$ denotes the set of positive integers $\leq k$. For the remainder of this paper, we fix the domain of the optimization problem to be the $d$-dimensional unit cube, denoted $\X := [0, 1]^d$.

\subsection{H\"older and Besov Spaces}

We now define the main function spaces and norms discussed in this paper, and review basic relationships between these spaces. We begin with H\"older spaces:

\begin{definition}[H\"older Seminorm, Space, and Ball]
Fix $s \in [0, \infty)$. When $s$ is an integer, $\C^s(\X)$ denotes the class of $s$-times differentiable functions in $\L_\infty(\X)$. Otherwise, the \emph{H\"older seminorm} $\|\cdot\|_{\C^s(\X)} : \C^{\lfloor s \rfloor} \to [0, \infty]$ is defined by
\[\|f\|_{\C^s(\X)}
  = \sup_{\beta \in \N^d : \|\beta\| = \lfloor s \rfloor} \sup_{x \neq y \in \X} \frac{\left| f^\beta(x) - f^\beta(y) \right|}{\|x - y\|_2^{s - \lfloor s \rfloor}},\]
for all $f \in \C^{\lfloor s \rfloor}(\X)$, where $f^\beta$ denotes the $\beta^{th}$ mixed derivative of $f$. The \emph{H\"older space} $\C^s(\X)$ is defined by
\[\C^s(\X) := \{f \in \L_\infty : \|f\|_{\C^s(\X)} < \infty\}\]
and, for $L > 0$, the \emph{H\"older ball} $\C^s(\X, L)$ is defined by
\[\C(\X, L) := \{f \in \L_\infty : \|f\|_{\C^s(\X)} \leq L\}.\]
\end{definition}

We now turn to defining Besov spaces. While many equivalent definitions of Besov spaces exist, we use a definition in terms of orthonormal wavelet bases that has classically been used in the context of nonparametric statistics~\citep{donoho1995asymptopia}. This wavelet basis can be constructed from a ``father'' wavelet $\phi \in \L_\infty(\X)$ by defining, for each $j \in \N$ and $\lambda \in [2^j]^d$ (i.e., $\lambda$ is a $d$-tuple of integers between $1$ and $2^j$),
\[\phi_{j,\lambda}(x) := \left\{
    \begin{array}{ccl}
        2^{dj/2} \phi(2^{dj}x - \lambda) & : & \text{if } 2^{dj}x - \lambda \in \X \\
        0 & : & \text{ else}.
    \end{array}
\right.\]
for all $x \in \X$. In particular, the set $\{\phi_{j,\lambda}\}_{j \in \N, \lambda \in [2^j]^d}$ is a basis of $\L^2(\X)$. We will additionally require the basis to be orthonormal and to satisfy a certain regularity condition called ``$r$-regularity'', for which the father wavelet must be carefully constructed; we refer the reader to \citet{meyer1992wavelets} or \citet{daubechies1992wavelets} for standard constructions. For this paper, the exact wavelet construction is not crucial as it affects only constant factors in our rates, and it suffices to note that (a) such an orthonormal basis exists, (b) it can be indexed as $\left\{ \phi_{j, \lambda} \right\}_{j \in \N, \lambda \in [2^j]^d}$, and (c), for each $j \in \N$, $\lambda \neq \lambda' \in [2^j]^d$, $\phi_{j,\lambda}$ and $\phi_{j,\lambda'}$ have disjoint supports. Given such a wavelet basis, we can define Besov spaces and norms as follows:

\begin{definition}[Besov Norm, Space, and Ball]
Fix $\sigma \in (0, \infty)$ and $p,q \in [1, \infty]$.
Given a wavelet basis $\{\phi_{j, \lambda}\}_{j \in \N, \lambda \in [2^j]^d}$ of $\L_2(\X)$, the \emph{Besov norm} $\|\cdot\|_{\B_{p,q}^\sigma(\X)} : \L_2(\X) \to [0, \infty]$ is defined by
\begin{equation}
    \|f\|_{\B_{p,q}^\sigma(\X)}
    = \left( \sum_{j \in \N} 2^{jq(\sigma+d\frac{p - 2}{2p})} \left( \sum_{\lambda \in [2^j]^d} f_{j,\lambda}^p \right)^{\frac{q}{p}}
		      \right)^{\frac{1}{q}},
	\label{eq:besov_norm}
\end{equation}
for all $f \in \L_2(\X)$, where $f_{j,\lambda} := \langle f, \phi_{j,\lambda} \rangle_{\L_2(\X)}$ denotes the $(j,\lambda)^{th}$ coefficient in the wavelet expansion of $f$.
The \emph{Besov space} $\B_{p,q}^\sigma(\X)$ is defined by
\[\B_{p,q}^\sigma(\X) := \{f \in \L_2 : \|f\|_{\B_{p,q}^\sigma(\X)} < \infty\}\]
and, for $L > 0$, the \emph{Besov ball} $\B_{p,q}^\sigma(\X, L)$ is defined by
\[\B_{p,q}^\sigma(\X, L) := \{f \in \L_2 : \|f\|_{\B_{p,q}^\sigma(\X)} \leq L\}.\]
\end{definition}

Besov spaces are appealing to work with in part because they provide a unified representation of many commonly used function spaces.
Examples include
H\"older spaces $\C^s(\X) = \B_{\infty,\infty}^s(\X)$,
Hilbert-Sobolev spaces $\B_{2,2}^s(\X)$ (which also correspond to the Mat\'ern kernel RKHS of order $s - d/2$), and the space $\text{BV}$ of functions of bounded variation, which satisfies $\B_{1,1}^1 \subseteq \text{BV} \subseteq\B_{1,\infty}^1$ \footnote{As will become clear from our main results, the parameter $q$ of the Besov space does not affect minimax convergence rates, and so an inclusion of the form $\B_{p,1}^\sigma \subseteq \mathcal{S} \subseteq\B_{p,\infty}^\sigma$ is effectively an equivalence for our purposes.}.
For standard references on Besov spaces, see \citet{sawano2018theory} or Chapter 14 of \citet{leoni2017first}.

\begin{figure*}
    \centering
    \includegraphics[width=\linewidth,trim=8mm 0mm 0mm 0mm,clip]{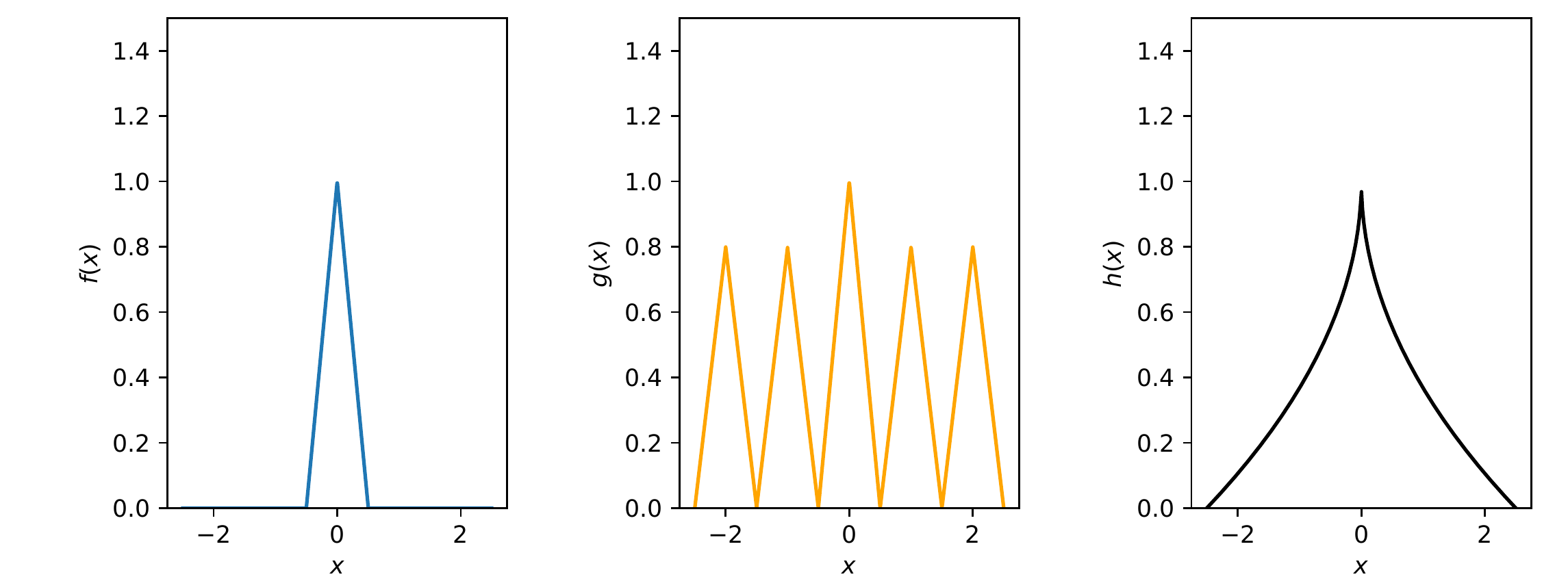}
    \caption{Three example functions, $f, g, h : \X = [-2.5, 2.5] \to [0, 1]$ with maxima at $x = 0$. $f$ and $g$ have the same Lipschitz norm ($\|f\|_{\C^1(\X)} = \|g\|_{\C^1(\X)}$), but $g$ has a larger Besov $\B_{p,q}^1$ norm ($\|f\|_{\B_{p,p}^1(\X)} < \|g\|_{\B_{p,p}^1(\X)}$) for $p < \infty$. The function $h$ has infinite Lipschitz norm ($\|h\|_{\C^1(\X)} = \infty$), but has finite $\tfrac{1}{2}$-H\"older and $1$-Sobolev norms ($\|h\|_{\C^{1/2}(\X)}, \|h\|_{\B^1_{2,2}(\X)} < \infty$).}
    \label{fig:lipschitz_versus_besov}
\end{figure*}

To illustrate the relative behaviors of H\"older and other Besov smoothness conditions, Figure~\ref{fig:lipschitz_versus_besov} shows an example of two functions with identical Lipschitz norm and different Besov norms.
Compared to H\"older smoothness, which is determined only by the least smooth part of a function, Besov smoothness is determined by a function's integrated smoothness over the entire domain. For example, for a differentiable function, the total variation norm is equivalent to the average magnitude of the gradient of the function~\citep{leoni2017first}. However, since both norms ultimately measure smoothness, Besov norms also exhibit certain relationships with H\"older norms, exemplified by the following continuous embedding of Besov spaces into H\"older spaces, which will be utilized later in this paper:
\begin{theorem}[Special Case of Theorem 1.15, \citet{haroske2009besov}]
Let $\sigma \in (d/p, \infty)$, $p, q \in [1, \infty]$. Then, $\B_{p,q}^\sigma(\X) \subseteq \C^{\sigma - d/p}(\X)$ and
\[\sup_{f \in \B_{p,q}^\sigma(\X) : f \neq 0} \frac{\|f\|_{\C^{\sigma - d/p}(\X)}}{\|f\|_{\B_{p,q}^\sigma(\X)}} < \infty.\]
Moreover, for any $s > \sigma - d/p$, $\B_{p,q}^\sigma(\X) \not\subseteq \C^s(\X)$.
\label{thm:besov_embedding}
\end{theorem}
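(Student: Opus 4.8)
The plan is to read the statement off wavelet-coefficient descriptions of the two sides, using the defining formula~\eqref{eq:besov_norm} together with the identification $\C^s(\X) = \B^s_{\infty,\infty}(\X)$ recorded above, which is meaningful precisely because the hypothesis $\sigma > d/p$ forces the target exponent $s := \sigma - d/p$ to be positive. Abbreviate the Besov norm of $f$ as $\|(a_j)_{j \in \N}\|_{\ell^q}$, where $a_j := 2^{j(\sigma + d/2 - d/p)}\left(\sum_{\lambda \in [2^j]^d} |f_{j,\lambda}|^p\right)^{1/p}$ and we have used $d(p-2)/(2p) = d/2 - d/p$; the $p = q = \infty$ instance of the same formula reads $\|g\|_{\C^s(\X)} \asymp \sup_{j \in \N,\, \lambda \in [2^j]^d} 2^{j(s + d/2)}|g_{j,\lambda}|$.

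For the embedding itself, fix $f$ with $\|f\|_{\B_{p,q}^\sigma(\X)} < \infty$. The elementary inclusions $\ell^q \hookrightarrow \ell^\infty$ and $\ell^p \hookrightarrow \ell^\infty$ give, for every $j$ and $\lambda$,
\[2^{j(s + d/2)}|f_{j,\lambda}| \;\le\; 2^{j(\sigma + d/2 - d/p)}\left(\sum_{\lambda'} |f_{j,\lambda'}|^p\right)^{1/p} = a_j \;\le\; \|(a_{j'})_{j'}\|_{\ell^q} = \|f\|_{\B_{p,q}^\sigma(\X)}.\]
Taking the supremum over $j$ and $\lambda$, $\|f\|_{\C^s(\X)} \asymp \sup_{j,\lambda} 2^{j(s + d/2)}|f_{j,\lambda}| \lesssim \|f\|_{\B_{p,q}^\sigma(\X)}$, which is at once the inclusion $\B_{p,q}^\sigma(\X) \subseteq \C^{\sigma - d/p}(\X)$ and the boundedness of the embedding.

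Since the identification $\C^s = \B^s_{\infty,\infty}$ carries the real weight, it is worth recording how the embedding would be proved by hand, estimating the classical H\"older seminorm of $f = \sum_j f_j$, $f_j := \sum_\lambda f_{j,\lambda}\phi_{j,\lambda}$, directly (say for non-integer $s$). With an $r$-regular father wavelet ($r > \sigma$) and the disjointness of the supports $\{\phi_{j,\lambda}\}_\lambda$ within each level (property (c)), one gets $\|f_j^\beta\|_{\L_\infty} \lesssim 2^{j(d/2 + \|\beta\|)}\max_\lambda |f_{j,\lambda}| \le 2^{j(\|\beta\| - s)}\|f\|_{\B_{p,q}^\sigma(\X)}$, which already yields uniform convergence of the series and of its derivatives of order at most $\lfloor s \rfloor$. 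For the seminorm, given $x \neq y$ and $\beta$ with $\|\beta\| = \lfloor s \rfloor$, pick $J$ with $2^{-J} \asymp \|x - y\|_2$ and bound $|f_j^\beta(x) - f_j^\beta(y)|$ by $\|x - y\|_2\,\|\nabla f_j^\beta\|_{\L_\infty}$ when $j \le J$ and by $2\|f_j^\beta\|_{\L_\infty}$ when $j > J$; both geometric series then sum to $\lesssim \|f\|_{\B_{p,q}^\sigma(\X)}\,\|x - y\|_2^{\,s - \lfloor s \rfloor}$ since $s - \lfloor s \rfloor \in (0,1)$. I expect \emph{this} to be the one genuinely delicate point: the crude estimate $\|f\|_{\C^s} \le \sum_j \|f_j\|_{\C^s}$ with $\|f_j\|_{\C^s} \lesssim a_j$ would demand $(a_j) \in \ell^1$, which fails as soon as $q > 1$, so the split at scale $2^{-J} \asymp \|x - y\|_2$ --- playing the spatial smoothness of the coarse blocks against the small amplitude of the fine blocks --- is exactly what allows the (endpoint) embedding to go through; everything else is routine sequence-space arithmetic.

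For the sharpness clause, $\B_{p,q}^\sigma(\X) \not\subseteq \C^s(\X)$ whenever $s > \sigma - d/p$, use a lacunary example: activate one coefficient per level, $f_{j,\lambda_j} := 2^{-j(\sigma + d/2 - d/p)} j^{-2/q}$ (read as $2^{-j(\sigma + d/2 - d/p)}$ if $q = \infty$), and set every other coefficient to $0$. Then $\left(\sum_\lambda |f_{j,\lambda}|^p\right)^{1/p} = |f_{j,\lambda_j}|$ for all $p$, so $\|f\|_{\B_{p,q}^\sigma(\X)}^q = \sum_j j^{-2} < \infty$ (and $f \in \L_2(\X)$, since $\sigma + d/2 - d/p > 0$), hence $f \in \B_{p,q}^\sigma(\X)$; on the other hand $2^{j(s + d/2)}|f_{j,\lambda_j}| = 2^{j(s - \sigma + d/p)} j^{-2/q} \to \infty$, so $\|f\|_{\C^s(\X)} \asymp \sup_{j,\lambda} 2^{j(s+d/2)}|f_{j,\lambda}| = \infty$. (If one picks the $\lambda_j$ so that the supports $\mathrm{supp}\,\phi_{j,\lambda_j}$ are mutually disjoint --- possible because they shrink with $j$ --- this is visible even without the characterization, as then $\|f\|_{\C^s(\X)} \ge \|f_{j,\lambda_j}\phi_{j,\lambda_j}\|_{\C^s(\X)} \asymp 2^{j(s + d/2)}|f_{j,\lambda_j}|$ for each $j$.) The whole statement is in any case subsumed by Theorem 1.15 of \citet{haroske2009besov} at the stated parameters, so the wavelet argument above is only for self-containedness.
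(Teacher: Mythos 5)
The paper offers no proof of this statement at all: it is imported wholesale as a special case of Theorem 1.15 of \citet{haroske2009besov}, so there is nothing to compare your argument against line by line. Your proposal is a correct, essentially standard proof via the wavelet-coefficient characterization, and it is a reasonable route here because the paper itself defines $\|\cdot\|_{\B^\sigma_{p,q}}$ through wavelet coefficients and records $\C^s = \B^s_{\infty,\infty}$, so the embedding with norm constant $1$ really is the two-line $\ell^p\hookrightarrow\ell^\infty$, $\ell^q\hookrightarrow\ell^\infty$ computation you give. You are also right to flag that the entire difficulty is buried in the identification $\B^s_{\infty,\infty}\subseteq\C^s$, and your sketch of that step is the correct one: the dyadic split at $2^{-J}\asymp\|x-y\|_2$, trading the gradient bound on coarse levels against the amplitude bound on fine levels, with both geometric series converging because $0 < s-\lfloor s\rfloor < 1$; the naive $\ell^1$ summation over levels would indeed fail for $q>1$. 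The lacunary counterexample for $s>\sigma-d/p$ is also correct, and the parenthetical with pairwise disjoint supports across levels is worth keeping, since without it you would additionally need the converse estimate $\sup_{j,\lambda}2^{j(s+d/2)}|f_{j,\lambda}|\lesssim\|f\|_{\C^s}$ (easy via vanishing moments, but not stated anywhere in the paper). Two caveats, both inherited from the paper rather than introduced by you: the paper's wavelet conventions (disjoint supports within a level coexisting with $r$-regularity, and the $2^{dj}x-\lambda$ scaling) are not mutually consistent for $d>1$ or $r\geq 1$, so a fully rigorous version of your ``by hand'' computation would have to fix a standard multiresolution construction first; and your argument covers non-integer $s=\sigma-d/p$, which matches the paper's definition of the H\"older seminorm but leaves the integer case to the cited reference.
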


\subsection{Problem Setup}

Fix real values $p, q \in [1, \infty]$ and $s \in (0, \infty)$, and fix a Besov class $\B_{p,q}^\sigma(\X, L)$ over $X := [0, 1]^d$. Fix an unknown function $f \in \B_{p,q}^\sigma(\X, L)$. To allow randomized optimization algorithms, also fix a random variable $Z$ taking values in a set $\Z$. Finally, fix an evaluation time horizon $T \in \N$ (which may be known or unknown).

Formally, for $t \in [T]$, a sequential $0^{th}$-order optimization procedure $\hat X = (\hat X_1, ..., \hat X_T)$ can be characterized as a sequence of queries $\hat X_{n+1} : \Z \times \X^t \times \R^d \to \X$ mapping the previous queries $(X_1,...,X_t)$ and their observed costs $Y_1 = f(X_1),...,Y_t = f(X_t)$ to a new query point.
For notational compactness, we will simply write $\hat X_{t+1} = \hat X_{t+1}(Z,X_1,...,X_t,Y_1,...,Y_t)$, with the understanding that $\hat X_{t+1}$ may depend on algorithmic randomness $Z$, previous queries $X_1,...,X_t$, and observations $Y_1,...,Y_t$.

In this paper, we will separately consider two problem settings, namely the \textit{noiseless case}, in which we query the objective $f$ directly, and the \textit{noisy case}, in which observations are corrupted by IID additive noise.

In the noiseless case, the goal is to minimize either the expected simple regret
\[R_S(\hat X, f) := \E_Z \left[ \sup_{x \in \X} f(x) - f(X_T)\right]\]
or the expected cumulative regret
\[R_C(\hat X, f) := \E_Z \left[ \sum_{n = 1}^T \sup_{x \in \X} f(x) - f(X_n)\right].\]
Specifically, we are interested in bounding (up to constant factors) the minimax risk
\begin{align*}
    M_{S,T}(\F) & := \inf_{\hat x, Z} \sup_{f \in \F} R_S(\hat x_T, f) \\
    \text{ or } \quad
    M_{C,T}(\F) & := \inf_{\hat x, Z} \sup_{f \in \F} R_C(\hat X_T, f),
\end{align*}
where the infima are taken over all optimization procedures $\hat X$ and randomness sources $Z$.

In the noisy case, we additionally fix a noise probability distribution $P$, and, rather than observing the true value of $f$ at the query point, our observations are corrupted by additive noise sampled IID from $P$; i.e., $Y_t = f(X_t) + \epsilon_t$, where $\epsilon_1,...,\epsilon_T \IID P$.
Our goal is again to minimize either the expected simple regret
\[R_S(\hat X, f, P)
  := \E_{Z, \epsilon} \left[ \sup_{x \in \X} f(x) - f(X_T)\right]\]
or the expected cumulative regret
\[R_C(\hat X, f, P)
  := \E_{Z, \epsilon} \left[ \sum_{n = 1}^T \sup_{x \in \X} f(x) - f(X_n)\right],\]
where the expectations are now additionally taken with respect to the noise $\epsilon$. Assuming further that the unknown distribution $P$ lies in some regularity class $\P$, we are interested in characterizing the minimax risk
\begin{align*}
    M_{S,T}(\F, \P) & := \inf_{\hat x, Z} \sup_{f \in \F, P \in \P} R_S(\hat X_T, f, P) \\
    \text{ or } \quad
    M_{C,T}(\F, \P) & := \inf_{\hat x, Z} \sup_{f \in \F, P \in \P} R_C(\hat X_T, f, P).
\end{align*}

In particular, we shall consider the case when $\P = \subG$ is the class of sub-Gaussian random variables with variance proxy $\eta^2$.

\section{Related Work}
\label{sec:related_work}
Bandit optimization of smooth functions is a well-studied problem with a number of theoretical and practical results, with perhaps the earliest results due to \citet{kiefer1952stochastic}, who proposed a predecessor of stochastic gradient descent based on finite-differences.
Here, we review the most relevant modern work, consisting primarily of work on the cases of objective functions lying in a H\"older space or RKHS.

First, we note that, under \emph{simple regret}, upper bounds can be obtained using ``pure exploration'' algorithms that randomly sample $\X$, apply standard (e.g., local polynomial~\citep{fan1996local} or wavelet~\citep{donoho1998minimax}) nonparametric regression methods, and then compute the optimum of the estimated regression function. In the H\"older case, it has long been known that pure exploration algorithms can achieve the minimax optimal rate, and recent work has therefore focused on showing that more sophisticated adaptive algorithms strictly outperform pure exploration under stronger assumptions (e.g., local convexity or other shape constraints near the optimum) on the objective function~\citep{minsker2012non,minsker2013estimation,grill2015black,wang2018optimization}. Our lower bounds confirm that, in both noiseless and noisy conditions, in terms of simple regret, pure exploration continues to be minimax rate-optimal under more general Besov smoothness assumptions, at least in the absence of further shape constraints.

Under \emph{cumulative regret}, a number of papers have studied the case when the objective function lies in the H\"older class $\C^s$~\citep{kleinberg2005nearly,kleinberg2008multi,bubeck2011x,bubeck2011lipschitz} (see Chapter 4 of \cite{slivkins2019introduction} for further review).
For example, in the $s$-H\"older case with $s \in (0, 1]$, if $\P$ is any class of uniformly bounded noise distributions, \citet{bubeck2011x} showed upper bounds on cumulative regret of order
\begin{equation}
    M_{C,T} \left( \C^s(\X, L), \P \right) \lesssim T^{\frac{s + d}{2s + d}} \left( \log T \right)^{\frac{1}{2s + d}}.
    \label{ineq:noisy_holder_cumulative_rate}
\end{equation}
This rate matches lower bounds such as ours in $T$, which, together with the embedding Theorem~\ref{thm:besov_embedding}, implies that the rates are minimax-optimal.
Our lower bounds for the noisy case match~\eqref{ineq:noisy_holder_cumulative_rate} when $s \in (0, 1]$, showing that this rate is optimal not only for H\"older classes but also for more general Besov classes.
We note that \citet{bubeck2011x} additionally showed that, for the upper bound~\eqref{ineq:noisy_holder_cumulative_rate}, Lipschitz smoothness (i.e., with $s = 1$) is needed only in weak local form, near the global optimum.

To the best of our knowledge, under cumulative regret, the $s$-H\"older case with $s > 1$ is open, in that a gap remains existing lower and upper bounds\footnote{For the significantly different setting where the objective function is strongly convex, \citet{akhavan2020exploiting} recently derived a dimension-independent minimax cumulative regret rate of $\asymp T^{1/s}$ that applies for $s > 1$, in the presence of noise.}. Our lower bounds extend smoothly from $s \in (0, 1]$ to $s > 1$, and we conjecture that our lower bounds are tight for all $s > 0$. However, existing upper bounds for $s > 1$ are polynomially larger in $T$. In particular, \citet{grant2020thompson} consider the H\"older case when $s$ is an integer (i.e., when the $(s - 1)^{st}$ derivative of the objective function is Lipschitz).
Under a particular prior on the objective function, they upper bounded the Bayesian cumulative risk of a Thompson Sampling~\citep{thompson1933likelihood} algorithm by order $T^{\frac{2s^2 + 7s + 1}{4s^2 + 6s + 2}}$. To the best of our knowledge, these are the only known upper bounds on cumulative regret that improve with $s > 1$, and they approach the parametric rate of $T^{1/2}$ as $s \to \infty$. However these rates are polynomially worse than the minimax rate for all finite $s$, and, moreover, they are proven for Bayesian risk, a weaker notion of risk than worst-case/minimax risk~\citep{wasserman2013all}.
\footnote{Concurrent with the present article, \cite{liu2020smooth} provide an algorithm, based on running a UCB meta-algorithm over a collection of local polynomial bandit algorithms, and bound the worst-case risk of this algorithm in the presence of noise for all H\"older exponents $s > 0$. Their upper bound exceeds our lower bound by a multiplicative factor of $\left( \log T \right)^{\frac{5s + 2d}{2s + d}}$, providing the minimax rate, up to polylogarithmic factors in $T$, for all $s > 0$.}

Beyond the H\"older setting, work has largely focused on the setting in which $\F$ is an RKHS. Due to the equivalence between RKHSs of Mat\'ern kernels and Hilbert-Sobolev spaces (see, e.g., Example 2.6 of \citet{kanagawa2018gaussian}) this intersects with our results in the special case $p = q = 2$. In particular, for $\sigma > d/2$, in the absence of noise, \citet{bull2011convergence} obtained a simple regret rate
\begin{equation}
    M_{S,T} \left( \B_{2,2}^\sigma(\X, L) \right) \asymp T^{1/2 - \sigma/d},
    \label{ineq:noiseless_holder_simple_rate}
\end{equation}
while, in the presence of noise, \citet{scarlett2017lower} proved lower bounds of orders
\[M_{S,T} \left( \B_{2,2}^\sigma(\X, L), \subG \right)
  \gtrsim \left( \frac{\eta^2}{T} \right)^{\frac{\sigma - d/2}{2\sigma}}\]
and
\[M_{C,T} \left( \B_{2,2}^\sigma(\X, L), \subG \right)
  \gtrsim \eta^{\frac{\sigma - d/2}{\sigma}} T^{\frac{\sigma + d/2}{2\sigma}},\]
under simple and cumulative regrets, respectively. Our results generalize these lower bounds to other Besov classes with $p, q \neq 2$, and, in the noisy case, tighten these bounds (by polylogarithmic factors in $T$ and $\eta$) to the correct minimax rate. Moreover, although this was less apparent in the context of RKHSs (and appears to have been missed by \citet{scarlett2017lower}), expressing this problem in the language of Besov spaces makes it clear that these rates can be directly compared to those for the H\"older case. Prior to this, the best known upper bound appears to have been a rate of $T^{\frac{\sigma + d^2 + d/2}{2\sigma + d^2}}$ due to \citet{srinivas2010gaussian} based on a Gaussian Process UCB algorithm; this rate is sub-optimal by polynomial factors in $T$.

On a more historical note, one of the primary motivators for this work is a series of classical results due to \citet{nemirovskii1985nonparametric}, \citet{donoho1998minimax}, and others, who investigated nonparametric \emph{estimation} problems, such as regression and density estimation, over Besov spaces.
They showed that minimax rates for nonparametric estimation over $\B_{p,q}^\sigma(\X)$ are significantly faster than over $\C^{\sigma - d/p}(\X)$.
However, in contrast to minimax optimality over $\C^{\sigma - d/p}(\X)$, for which simpler methods (e.g., kernel, basis, or spline regression) are optimal, leveraging accelerated convergence rates over $\B_{p,q}^\sigma(\X)$ requires relatively complex methods (such as thresholding wavelet methods or spatially adaptive smoothing splines) that are ``spatially adaptive'', in the sense of devoting selectively greater representational power to less smooth parts of the estimand.
The equivalence we show for \emph{optimization} over Besov and H\"older spaces shows that this phenomenon fails to carry over from estimation; in particular, spatial adaptivity confers no asymptotic benefit for bandit optimization over Besov spaces. This may be related to the finding of \citet{bubeck2011x} that Lipschitz smoothness may be needed only in a weak sense, locally near the global optimum; i.e., allocation of representational power to the function near a point $x$ should correspond to the plausibility of that point being a global optimum, rather than to relative smoothness of $f$ near that point.

Finally, we note that many papers have also considered stronger shape assumptions such as convexity of the objective function, either globally or near an optimum~\cite{auer2007improved,cope2009regret,bubeck2018sparsity,wang2018convex,golovin2019gradientless}. In contrast to the rates discussed above and elsewhere in this paper, which scale exponentially with the number $d$ of optimization variables, rates in these cases scale much more favorably (polynomially) with $d$. However, convexity and related shape assumptions may be too strong for many black-box optimization settings.

To summarize, \textbf{our main contributions} are:
\begin{enumerate}
    \item We identify upper bounds on the regret of existing algorithms under Besov smoothness conditions by embedding the Besov space in a H\"older space of lower smoothness. For many Besov spaces, these bounds are the first available, while, for others, such as Hilbert-Sobolev spaces, they improve on state-of-the-art results.
    \item We derive novel information-theoretic lower bounds on minimax regret under Besov smoothness conditions, generalizing and tightening a number of existing lower bounds. These lower bounds match the above upper bounds, thereby identifying the minimax rate for these problems.
\end{enumerate}
These results are provided in both noiseless and noisy conditions, under both simple and cumulative losses, providing a comprehensive picture for this class of problems. Moreover, as discussed in Section~\ref{sec:conclusion}, a number of big-picture conclusions follow from our results.

\section{Noiseless Case}
\label{sec:noiseless_case}

Here, we state and prove our main results, identifying minimax rates of simple and cumulative regret over Besov spaces, for the noiseless case.
\begin{theorem}[Minimax Rates, Noiseless Case]
For $\sigma > d/p$,
\begin{align*}
    M_{S,T} \left( \B_{p,q}^\sigma(\X, L) \right)
    & \asymp M_{S,T} \left( \C^{\sigma - d/p}(\X, L) \right) \\
    & \asymp T^{1/p - \sigma/d},
\end{align*}
and, for $\sigma \in (d/p, d/p + 1]$,
\begin{align*}
    M_{C,T} \left( \B_{p,q}^\sigma(\X, L) \right)
    & \asymp M_{C,T} \left( \C^{\sigma - d/p}(\X, L) \right) \\
    & \asymp T^{1 + 1/p - \sigma/d}.
\end{align*}
  \label{thm:noiseless_rates}
\end{theorem}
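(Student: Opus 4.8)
The plan is to prove the two-sided bounds by handling the upper and lower bounds by completely different means, and then to read off the H\"older rates for free from the Besov rates. Since $\C^s(\X,L)=\B^s_{\infty,\infty}(\X,L)$ exactly, once the Besov rates $M_{S,T}(\B^\sigma_{p,q}(\X,L))\asymp T^{1/p-\sigma/d}$ and $M_{C,T}(\B^\sigma_{p,q}(\X,L))\asymp T^{1+1/p-\sigma/d}$ are established for all admissible $(p,q)$, specializing to $(p,q)=(\infty,\infty)$ and $\sigma\leftarrow\sigma-d/p$ gives $M_{\bullet,T}(\C^{\sigma-d/p}(\X,L))$ at the same rates, so the asserted equivalences follow. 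Thus the real work is the single pair of two-sided bounds over $\B^\sigma_{p,q}(\X,L)$.

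\textbf{Upper bounds.} By the embedding Theorem~\ref{thm:besov_embedding} there is a finite constant $C=C(\sigma,p,q,d)$ with $\B^\sigma_{p,q}(\X,L)\subseteq\C^{s}(\X,CL)$ for $s:=\sigma-d/p$, so it suffices to bound minimax regret over the H\"older ball $\C^s(\X,CL)$, and the constant $CL$ only rescales the rate. For simple regret I would analyze a pure-exploration procedure (as in Section~\ref{sec:related_work}): query $f$ on a regular grid of $\Theta(T)$ points (spacing $h\asymp T^{-1/d}$), form a local-polynomial interpolant $\hat f$ of degree $\lfloor s\rfloor$ so that $\|\hat f-f\|_{\L_\infty(\X)}\lesssim h^{s}\asymp T^{-s/d}$, and output $\argmax_x\hat f(x)$; then $\sup_x f(x)-f(\hat X_T)\le 2\|\hat f-f\|_{\L_\infty}\lesssim T^{-s/d}=T^{1/p-\sigma/d}$. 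For cumulative regret with $s\in(0,1]$ I would analyze a hierarchical ``zooming'' search over dyadic sub-cubes: in phase $\ell$, query the centers of all currently active depth-$\ell$ cells and promote to phase $\ell+1$ the children of every cell whose queried value is within $\Theta(2^{-\ell s})$ of the best value seen so far; running phases $\ell=0,\dots,L$ with $2^{dL}\asymp T$ keeps every queried point within $O(2^{-\ell s})$ of $\sup_x f(x)$ during phase $\ell$ while at most $2^{d\ell}$ cells are ever active, so the total regret is $\sum_{\ell=0}^{L}2^{d\ell}\cdot O(2^{-\ell s})\asymp 2^{L(d-s)}\asymp T^{1-s/d}=T^{1+1/p-\sigma/d}$ (for $s<d$; the borderline $s=d$ forces $d=1$ and would be treated separately).

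\textbf{Lower bounds.} The heart is a single-bump construction exploiting that a Besov ball contains functions with a far sharper peak than the corresponding H\"older ball. Fix $j\in\N$ with $2^{dj}\ge 2T$ and, for each $\lambda\in[2^j]^d$, set $g_\lambda:=c_j\,\phi_{j,\lambda}$ with $c_j>0$ chosen so that $\|g_\lambda\|_{\B^\sigma_{p,q}(\X)}=L$. Since $g_\lambda$ has a single nonzero wavelet coefficient, \eqref{eq:besov_norm} gives $c_j=L\,2^{-j(\sigma+d(p-2)/(2p))}$ exactly, hence $\sup_x g_\lambda(x)\asymp c_j\|\phi_{j,\lambda}\|_{\L_\infty}\asymp c_j 2^{dj/2}\asymp L\,2^{-j(\sigma-d/p)}\asymp L\,T^{1/p-\sigma/d}$; by property (c) of the wavelet basis the $g_\lambda$ have pairwise disjoint supports, and each $g_\lambda$ vanishes outside its cell. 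Let $\Lambda$ be uniform on $[2^j]^d$ and take the objective to be $g_\Lambda$. A noiseless query outside cell $\Lambda$ returns $0$, so, by induction on the step count, any (randomized, adaptive) procedure run against $g_\Lambda$ agrees step-by-step with the run it would make were every observation $0$, until the first step $\tau$ at which it queries cell $\Lambda$; before $\tau$ the set of queried cells is a deterministic function of the algorithm's randomness $Z$, of size $\le n$ by step $n$, whence $\Pr[\tau>n\mid Z]\ge 1-n\,2^{-dj}\ge 1/2$ for all $n\le T$. For simple regret, on $\{\tau>T\}$ (probability $\ge 1/2$) the output $\hat X_T$ lies outside cell $\Lambda$, so, averaging over $\Lambda$ and bounding the supremum over $\F$ from below by this average,
\[
M_{S,T}(\B^\sigma_{p,q}(\X,L))\;\ge\;\E_{Z,\Lambda}\!\left[\sup_x g_\Lambda(x)-g_\Lambda(\hat X_T)\right]\;\ge\;\tfrac12\sup_x g_\lambda(x)\;\gtrsim\;L\,T^{1/p-\sigma/d}.
\]
For cumulative regret every step before $\tau$ contributes the full peak and $\E_{Z,\Lambda}[\min(\tau-1,T)]=\sum_{n=0}^{T-1}\Pr[\tau>n+1]\ge T/2$, so
\[
M_{C,T}(\B^\sigma_{p,q}(\X,L))\;\ge\;\E_{Z,\Lambda}\!\left[\sum_{n=1}^{T}\Bigl(\sup_x g_\Lambda(x)-g_\Lambda(\hat X_n)\Bigr)\right]\;\ge\;\tfrac{T}{2}\sup_x g_\lambda(x)\;\gtrsim\;L\,T^{1+1/p-\sigma/d}.
\]
These match the upper bounds, completing the proof.

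\textbf{Main obstacle.} I expect the delicate points to be: (i) pinning down $\sup_x\phi_{j,\lambda}$ and $\|\phi_{j,\lambda}\|_{\L_\infty}$ in terms of the $r$-regular wavelet normalization — including the mild boundary truncation for cells near $\partial\X$, which only removes a constant fraction of the $\lambda$'s — so that the peak height has exactly the scaling $T^{1/p-\sigma/d}$; and (ii) formalizing the coupling in the lower bound, namely that a general randomized adaptive procedure's first-$\tau$ trajectory is, conditionally on $Z$, distributed identically whether run against $g_\Lambda$ or against the zero function, so that the "counting'' bound on discovered cells is rigorous. The restriction $\sigma\le d/p+1$ enters only on the upper-bound side, where a piecewise-constant zooming model suffices; lifting it is precisely the open problem noted in Section~\ref{sec:related_work}.
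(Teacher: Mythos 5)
Your proposal is correct and follows essentially the same route as the paper: the lower bound rests on the identical family of $\asymp T$ disjointly supported rescaled wavelets at resolution $2^{dj}\asymp T$ with peak height $\asymp L\,2^{-j(\sigma-d/p)}$, combined with the observation that $T$ queries cannot visit all cells (you phrase this as an average over a uniform prior on $\Lambda$ where the paper uses a pigeonhole over a deterministic worst-case $f$, and you bound cumulative regret directly via $\E[\tau]\ge T/2$ where the paper sums simple-regret lower bounds --- both are fine). The upper bounds likewise go through the embedding $\B^\sigma_{p,q}(\X)\hookrightarrow\C^{\sigma-d/p}(\X)$; the paper simply cites known H\"older-case algorithms where you sketch them.
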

Before proving this result, we make a few remarks:

\begin{remark}
When $\sigma \leq d/p$, the picture is quite simple, as $\B_{p,q}^\sigma \left( \X, L \right)$ contains functions with unbounded singularities, and it is easy to show that $M_{S,T} \left( \B_{p,q}^\sigma \left( \X, L \right) \right) = M_{C,T} \left( \B_{p,q}^\sigma \left( \X, L \right) \right) = \infty$. When $\sigma > d/p + 1$, under cumulative regret, our lower bounds continue to hold, and we conjecture that they are tight, while upper bounds are either non-existent or loose. We leave it to future work to tighten these upper bounds.
\label{remark:singularity}
\end{remark}

\begin{remark}
Since the instantaneous regret at time $t$ of any algorithm for minimizing cumulative regret will always be at least the instantaneous regret of the the minimax-optimal procedure for minimizing simple regret at time $t$, one can check that, for any $T$, $M_{C,T} \geq \sum_{t = 1}^T M_{S,t}$. Also, given an algorithm $\hat X$ with expected cumulative regret $R$, one can construct an algorithm with expected simple regret $\frac{R}{T - 1}$ by sampling $X_T$ 
uniformly at random from the first $T - 1$ queries of $\hat X$. Thus, in what follows, we prove lower bounds on simple regret and upper bounds on cumulative regret, while lower bounds on cumulative regret and upper bounds on simple regret follow via these inequalities.
\label{remark:simple_cumulative}
\end{remark}

We now turn to proving Theorem~\ref{thm:noiseless_rates}. We begin by constructing a ``worst-case'' subset of Besov functions:

\begin{lemma}[Construction of $\Theta_j$]
    For any positive integer $j \in \N$, there exists a subset $\Theta_j \subseteq \B_{p,q}^\sigma(\X, L)$ of Besov functions with the following properties:
    \begin{enumerate}[label=(\alph*)]
        \item $|\Theta_j| = 2^{d \cdot j}$.
        \item For $f \neq g \in \Theta_j$, $f$ and $g$ have disjoint supports.
        \item For all $f \in \Theta_j$, $\sup_{x \in \X} f(x) = L \phi^* 2^{j(d/p - \sigma)}$, where $\phi^* = \sup_{x \in \X} \phi(x)$ is a constant depending only on the choice of father wavelet.
        \item For all $f \in \Theta_j$,
        \[\max_{f \in \Theta_j} \|f\|_\infty = L \|\phi\|_\infty 2^{j(d/p - \sigma)}.\]
    \end{enumerate}
    \label{lemma:Theta_construction}
\end{lemma}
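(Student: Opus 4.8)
The plan is to take $\Theta_j$ to consist of suitably rescaled copies of individual wavelets at resolution level $j$. Concretely, I would set $c_j := L\, 2^{j(d/p - d/2 - \sigma)}$ and define
\[\Theta_j := \left\{ c_j\, \phi_{j,\lambda} : \lambda \in [2^j]^d \right\}.\]
For distinct $\lambda \neq \lambda' \in [2^j]^d$, the wavelets $\phi_{j,\lambda}$ and $\phi_{j,\lambda'}$ have disjoint, nonempty supports (the disjointness is part of the wavelet construction, and nonemptiness holds since each $\phi_{j,\lambda}$ is a unit vector), so the map $\lambda \mapsto c_j\phi_{j,\lambda}$ is injective and the functions in $\Theta_j$ have pairwise disjoint supports. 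This immediately yields property (a), since $|\Theta_j| = |[2^j]^d| = 2^{dj}$, and property (b).

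The remaining work is a direct computation whose only real ingredient is that orthonormality of the wavelet basis makes the Besov norm trivial to evaluate on a single basis element. For $f = c_j\phi_{j,\lambda}$, the wavelet coefficients are $f_{j',\lambda'} = \langle f, \phi_{j',\lambda'}\rangle_{\L_2(\X)} = c_j$ if $(j',\lambda') = (j,\lambda)$ and $0$ otherwise; substituting into the definition~\eqref{eq:besov_norm}, only the $j' = j$ term survives and the inner sum over $\lambda'$ collapses to a single term, so
\[\|f\|_{\B_{p,q}^\sigma(\X)} = 2^{j\left(\sigma + d\frac{p-2}{2p}\right)}|c_j| = 2^{j(\sigma + d/2 - d/p)}\,L\,2^{j(d/p - d/2 - \sigma)} = L,\]
using $d\frac{p-2}{2p} = d/2 - d/p$; the degenerate cases $p = \infty$ and/or $q = \infty$, read with the usual $\ell^\infty$ conventions, change nothing because $f$ has a single nonzero coefficient. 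Hence $\Theta_j \subseteq \B_{p,q}^\sigma(\X, L)$.

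For properties (c) and (d) I would use the explicit form of $\phi_{j,\lambda}$. On its support, $f(x) = c_j\, 2^{dj/2}\, \phi(2^{dj}x - \lambda)$, and since the dilated and shifted argument ranges over all of $\X$ (in particular over a point where $\phi$ attains $\phi^* = \sup_{x\in\X}\phi(x)$, which is positive for the standard constructions since the father wavelet integrates to a nonzero constant), we get
\[\sup_{x\in\X} f(x) = c_j\, 2^{dj/2}\, \phi^* = L\, 2^{j(d/p - d/2 - \sigma)}\, 2^{dj/2}\, \phi^* = L\,\phi^*\, 2^{j(d/p - \sigma)},\]
which is (c). The identical computation with $\|\phi\|_\infty$ in place of $\phi^*$ gives $\|f\|_\infty = L\,\|\phi\|_\infty\, 2^{j(d/p - \sigma)}$ for \emph{every} $f \in \Theta_j$, so maximizing over the finite set $\Theta_j$ gives (d).

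I do not expect a genuine obstacle here: once the family is written down, the proof is essentially bookkeeping. The points that require a little care are the exponent arithmetic in the Besov norm, the $p,q \in \{\infty\}$ edge cases, and verifying — from the normalization and support conventions fixed for $\phi_{j,\lambda}$ earlier — that $\sup_x \phi_{j,\lambda}(x) = 2^{dj/2}\phi^*$ with $\phi^* > 0$. This last fact is exactly what guarantees that the functions in $\Theta_j$ have positive, mutually equal maxima, which is what makes $\Theta_j$ a useful worst-case family for the regret lower bounds in the rest of the paper.
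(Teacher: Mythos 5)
Your proposal is correct and follows essentially the same route as the paper: the same family $\Theta_j = \{L\,2^{-j(\sigma + d(1/2 - 1/p))}\phi_{j,\lambda}\}_{\lambda \in [2^j]^d}$ (your $c_j$ is exactly this scaling constant), with (a) and (b) read off from the wavelet construction, (c) and (d) by the same direct evaluation of the rescaled wavelet's supremum, and membership in $\B_{p,q}^\sigma(\X, L)$ by noting that a single nonzero coefficient collapses the Besov norm to $2^{j(\sigma + d/2 - d/p)}|c_j| = L$. Your additional attention to the $p,q = \infty$ conventions and the positivity of $\phi^*$ is sound but does not change the argument.
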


\begin{proof}
Let $\Theta_j := \{L 2^{-j(\sigma+d(1/2-1/p))} \phi_{j, \lambda} : \lambda \in [2^j]^d\}$ be a rescaling of the wavelet basis functions at resolution $j$ by the constant $L 2^{-j(\sigma+d(1/2-1/p))}$. Properties (a) and (b) are immediate from the construction of the wavelet basis. For property (c), note that, for $f \in \Theta_j$,
\begin{align*}
  \max_{x \in \X} f(x)
  & = L \phi_{j,\lambda}^* 2^{-j(\sigma+d(1/2-1/p))} \\
  & = L (\phi^* 2^{dj/2}) 2^{-j(\sigma+d(1/2-1/p))} \\
  & = L \phi^* 2^{j(d/p - \sigma)}.
\end{align*}
Checking property (d) is nearly identical, replacing $\phi^*$ with $\|\phi\|_\infty$.
Finally, to verify $\Theta_j \subseteq \B_{p,q}^\sigma(\X, L)$, Eq.~\eqref{eq:besov_norm} gives, for $\lambda \in [2^j]^d$:
\begin{align*}
    & \|L 2^{-j(\sigma+d(1/2-1/p))} \phi_{j, \lambda}\|_{\B_{p,q}^\sigma(\X)} \\
    & = 2^{j(\sigma+d(1/2-1/p))} \langle L 2^{-j(\sigma+d(1/2-1/p))} \phi_{j, \lambda}, \phi_{j, \lambda} \rangle = L.
\end{align*}
\end{proof}

Lemma~\ref{lemma:Theta_construction} constructed a large family $\Theta_j$ of objective functions with disjoint supports. We now use this lemma to prove the main Theorem~\ref{thm:noiseless_rates}. The strategy is to show, based on the size of $\Theta_j$, that, for any optimization strategy $\hat x$, there is some objective function $f \in \Theta_j$ such that none of the first $T$ points sampled by $\hat x$ lies in the support of $f$. Thus, when the true objective function is $f$, the simple regret incurred by $\hat x$ is $\sup_{x \in \X} f(x)$.

\begin{proof}[Proof of Theorem~\ref{thm:noiseless_rates}]
For $s \in (0, 1]$, existing results (e.g., \citet{munos2011optimistic,malherbe2017global}) imply the upper bound
$M_{C,T} \left( \C^s(\X, L) \right) \lesssim T^{1 - s/d}$
for the H\"older case. Our upper bound on $M_{C,T} \left( \B_{p,q}^\sigma(\X, L) \right)$ follows from the continuous embedding $\B_{p,q}^\sigma(\X) \hookrightarrow \C^{\sigma - d/p}(\X)$ (Theorem~\ref{thm:besov_embedding}). We now turn to lower bounds on expected simple regret.

Fix any optimization procedure $\hat x$ and any random variable $Z$, and let $\Theta = \Theta_j$ denote the set of functions constructed in Lemma~\ref{lemma:Theta_construction} with $j = \left\lceil \frac{1}{d} \log_2(2T) \right\rceil$, so that $2T \leq |\Theta| \leq 2^{d + 1}T$ and
\[\min_{f \in \Theta_j} \max_{x \in \X} f(x) \geq L \phi^* 2^{(d + 1)(1/p - \sigma/d)} T^{1/p - \sigma/d}.\]
For each $f \in \Theta$, let $E_f$ denote the event that $f(X_1) = \cdots = f(X_T) = 0$. Since the elements of $\Theta$ have disjoint support, $\sum_{f \in \Theta} 1_{E_f} \geq |\Theta| - T \geq T$, and so, since $|\Theta| \leq 2^{d + 1}T$,
\[2^{d + 1}T \cdot \max_{f \in \Theta} \Pr_Z[E_f]
  \geq \sum_{f \in \Theta} \Pr_Z[E_f]
  = \E_Z \left[ \sum_{f \in \Theta} 1_{E_f} \right] \geq T.\]
In particular, there exists a function\footnote{Note that $f_{\hat x, \mu_Z}$ depends only on the distribution $\mu_Z$ of $Z$, not on any particular instance of $Z$.} $f_{\hat x, \mu_Z} \in \Theta$ such that $\Pr_Z[E_{f_{\hat x, \mu_Z}}] \geq 2^{-(d + 1)}$. It follows that
\begin{align*}
    & M_T \left( \B_{p,q}^\sigma(\X, L) \right) \\
    & = \inf_{\hat x, Z} \sup_{f \in \B_{p,q}^\sigma(\X, L)} \E_Z \left[ \max_{x \in \X} f(x) - f(x_T) \right] \\
    & \geq \inf_{\hat x, Z} \E_Z \left[ \max_{x \in \X} f_{\hat x, \mu_Z}(x) - f_{\hat x, \mu_Z}(x_T) \right] \\
    & \geq \inf_{\hat x, Z} \Pr_Z[E_{f_{\hat x, \mu_Z}}] \cdot \max_{x \in \X} f_{\hat x, \mu_Z}(x) \\
    & \geq 2^{-(d + 1)} \cdot L \phi^* 2^{(d + 1)(1/p - \sigma/d)} T^{1/p - \sigma/d}.
\end{align*}
\end{proof}

\section{Noisy Case}
\label{sec:noisy_case}

In this section, we state and prove lower bounds on the minimax simple and cumulative regrets in the case where the observed rewards are noisy. Our approach will utilize the following version of Fano's lemma:
\begin{lemma} (Fano's Lemma; Simplified Form of Theorem 2.5 of \citet{tsybakov2008introduction})
    Fix a family $\P$ of distributions over a sample space $\X$ and fix a pseudo-metric $\rho : \P \times \P \to [0, \infty]$ over $\P$. Suppose there exist $P_0 \in \P$ and a set $\Theta \subseteq \P$ such that
    \[\frac{1}{|\Theta|} \sum_{P \in \Theta} D_{KL}(P_0,P)
      \leq \frac{\log |\Theta|}{16},\]
    where $D_{KL} : \P \times \P \to [0, \infty]$ denotes Kullback-Leibler divergence.
    Then,
    \[\inf_{\hat P} \sup_{P \in \P} \Pr \left[ \rho(P,\hat P)
      \geq \frac{1}{2} \inf_{Q, R \in \Theta} \rho(Q,R) \right] \geq 1/8,\]
    where the first $\inf$ is taken over all estimators $\hat P$.
    \label{thm:tsybakov_fano}
\end{lemma}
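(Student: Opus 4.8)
The plan is to obtain this statement as a specialization of Theorem~2.5 of \citet{tsybakov2008introduction}, which I will take as given. First I would reduce the minimax-over-$\P$ quantity to a minimax-over-$\Theta$ quantity: since $\Theta \subseteq \P$, we have $\sup_{P \in \P} \Pr[\rho(P,\hat P) \ge s] \ge \max_{P \in \Theta}\Pr[\rho(P,\hat P) \ge s]$ for every estimator $\hat P$, so it suffices to lower bound the latter. Write $M := |\Theta|$ (if $M \le 1$ the statement is vacuous, so assume $M \ge 2$), and set the separation radius $s := \tfrac12 \inf_{Q \ne R \in \Theta}\rho(Q,R)$, so that the members of $\Theta$ are pairwise $2s$-separated in $\rho$. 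Taking $\Theta$ together with the reference distribution $P_0$ as the finite hypothesis family, the hypothesis of Tsybakov's theorem on the Kullback--Leibler radius is precisely the assumed inequality $\tfrac1M\sum_{P\in\Theta}D_{KL}(P_0,P) \le \tfrac{\log M}{16}$; that is, his condition holds with constant $\alpha = \tfrac1{16} < \tfrac18$.

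Theorem~2.5 of \citet{tsybakov2008introduction} then gives
\[
  \inf_{\hat P}\,\max_{P\in\Theta}\Pr\left[\rho(P,\hat P)\ge s\right]
  \ \ge\ \frac{\sqrt M}{1+\sqrt M}\left(1 - 2\alpha - \sqrt{\frac{2\alpha}{\log M}}\right),
\]
and the only remaining step is arithmetic. With $\alpha = \tfrac1{16}$ the right-hand side equals $\tfrac{\sqrt M}{1+\sqrt M}\bigl(\tfrac78 - \sqrt{1/(8\log M)}\bigr)$, a product of two positive factors that are each nondecreasing in $M$ for integer $M \ge 2$ (the first obviously, the second because $\sqrt{1/(8\log M)}$ decreases). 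Hence the product is minimized at $M = 2$, where (with $\log = \ln$) it equals $\tfrac{\sqrt2}{1+\sqrt2}\bigl(\tfrac78 - \sqrt{1/(8\ln 2)}\bigr) \approx 0.26 > \tfrac18$. This yields the claimed bound, with margin to spare --- which is why the crude choice $\alpha = 1/16$ in the hypothesis suffices.

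I do not expect a genuine obstacle here: the substance is just matching Tsybakov's hypotheses and propagating the constant. A few points do need care. (i) The infimum defining the separation radius must be read over \emph{distinct} pairs $Q \ne R$, since otherwise it is trivially zero. (ii) The direction of the KL divergence: Tsybakov's condition is most naturally written with the hypothesis distribution in the first slot, whereas the lemma writes $D_{KL}(P_0,P)$; in every application in this paper the relevant noise is (sub-)Gaussian, for which the divergence is symmetric in the means, so the direction is immaterial, but in general one should fix conventions so that $P_0$ really plays the role of the reference measure. (iii) Tsybakov's statement of Theorem~2.5 also asks the ``null'' point to be $2s$-separated from the hypotheses; this is accommodated either by counting $P_0$ among the well-separated points or by appealing to the form of Fano's method in which $P_0$ enters only through the mutual-information bound $I \le \tfrac1M\sum_{P\in\Theta}D_{KL}(P,P_0)$ and need not be separated. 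Finally, if one wanted a \emph{self-contained} proof instead of a citation, the real work would be the sharpened Fano/Birg\'e-type inequality behind Theorem~2.5: the textbook bound $1 - (I+\log 2)/\log M$ is already vacuous at $M = 2$, so extracting a fixed positive constant uniformly over $M \ge 2$ is exactly where that refinement is needed.
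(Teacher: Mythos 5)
Your derivation is correct and matches the paper's approach exactly: the paper gives no proof of this lemma, importing it directly as a specialization of Theorem~2.5 of \citet{tsybakov2008introduction}, which is precisely what you do, with the constant-checking (that $\alpha = 1/16$ yields a bound of about $0.26 > 1/8$ uniformly over $|\Theta| \geq 2$) made explicit. Your caveats (i)--(iii) about the infimum over distinct pairs, the direction of the KL divergence, and the separation of the reference point $P_0$ are all legitimate observations about imprecisions in the lemma as stated, and each is harmless in the way the lemma is actually applied in the paper.
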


\begin{theorem}[Minimax Rates, Noisy Case]
    For $\sigma > d/p$,
    \begin{align*}
        & M_{S,T} \left( \B_{p,q}^\sigma(\X, L), \subG \right) \\
        & \asymp M_{S,T} \left( \C^{\sigma - d/p}(\X, L), \subG \right) \\
        & \asymp \max \left\{ \left( \frac{\eta^2 \log \frac{T}{\eta^2}}{T} \right)^{\frac{\sigma - d/p}{2(\sigma - d/p) + d}}\hspace{-2mm},
          M_{S,T} \left( \B_{p,q}^\sigma(\X, L) \right) \right\},
    \end{align*}
    and, for $\sigma \in (d/p, d/p + 1]$,
    \begin{align*}
        & M_{C,T} \left( \B_{p,q}^\sigma(\X, L), \subG \right) \\
        & \asymp M_{C,T} \left( \C^{\sigma - d/p}(\X, L), \subG \right) \\
        & \asymp \max \Bigg\{ T^{\frac{d + \sigma - d/p}{2(\sigma - d/p) + d}} \left( \eta^2 \log \frac{T}{\eta^2} \right)^{\frac{\sigma - d/p}{2(\sigma - d/p) + d}}, \\
        & \hspace{1.5cm} M_{C,T} \left( \B_{p,q}^\sigma(\X, L) \right) \Bigg\}.
    \end{align*}
      \label{thm:noisy_rates}
\end{theorem}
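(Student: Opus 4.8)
The plan is to prove the noisy-case rates via the same route as Theorem~\ref{thm:noiseless_rates}: establish upper bounds for the H\"older class $\C^{\sigma-d/p}(\X,L)$ and transfer them to $\B_{p,q}^\sigma(\X,L)$ using the embedding Theorem~\ref{thm:besov_embedding}; establish lower bounds directly for $\B_{p,q}^\sigma(\X,L)$ using a perturbed version of the construction $\Theta_j$ from Lemma~\ref{lemma:Theta_construction} together with Fano's lemma (Lemma~\ref{thm:tsybakov_fano}). By Remark~\ref{remark:simple_cumulative}, it suffices to prove lower bounds on simple regret and upper bounds on cumulative regret; the other two directions then follow from the relations $M_{C,T} \geq \sum_{t=1}^T M_{S,t}$ and the random-restart reduction. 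The $\max$ appearing in each rate reflects two separate regimes: the ``noise-dominated'' term (the novel contribution here) and the ``noiseless'' term $M_{S,T}(\B_{p,q}^\sigma(\X,L))$ or $M_{C,T}(\B_{p,q}^\sigma(\X,L))$ inherited from Theorem~\ref{thm:noiseless_rates}, which lower-bounds the noisy rate trivially since noisy observations are never more informative than noiseless ones.

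For the upper bound, I would invoke an off-the-shelf noisy bandit result for H\"older functions: for $s \in (0,1]$, a pure-exploration-plus-nonparametric-regression scheme (or the zooming/HOO-type analyses of \citet{bubeck2011x}) yields $M_{S,T}(\C^s(\X,L),\subG) \lesssim \big(\eta^2 \log(T/\eta^2) / T\big)^{s/(2s+d)}$, after optimizing the exploration budget. Setting $s = \sigma - d/p$ and composing with the embedding gives the stated upper bound on $M_{S,T}(\B_{p,q}^\sigma(\X,L),\subG)$, and the cumulative bound follows by a standard doubling/phased argument (or again directly from \citet{bubeck2011x} with $s=\sigma-d/p \le 1$, which is exactly the restriction $\sigma \in (d/p, d/p+1]$ in the cumulative statement). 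The noiseless term enters the $\max$ because at small $T$ (or small $\eta$) the noiseless lower bound dominates, and the pure-exploration upper bound must be taken together with the noiseless upper bound as well.

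For the lower bound, the key step is a Fano-type argument over a subfamily of $\Theta_j$. Here I cannot use the full $|\Theta_j| = 2^{dj}$ functions with disjoint supports directly as in the noiseless case, because with noise the learner can average out a single ``bump''; instead I fix a resolution $j$ to be chosen later, let $P_0$ be the distribution of $T$ noisy observations under the zero function (all bumps absent), and for each $\lambda$ let $P_\lambda$ be the observation distribution when the single rescaled wavelet $L 2^{-j(\sigma+d(1/2-1/p))}\phi_{j,\lambda}$ is present. By disjointness of supports, the KL divergence $D_{KL}(P_0, P_\lambda)$ is controlled by the number of the $T$ query points landing in the support of $\phi_{j,\lambda}$ times the squared bump height $\big(L\phi^* 2^{j(d/p-\sigma)}\big)^2 / \eta^2$; averaging over the $2^{dj}$ choices of $\lambda$, and using that the total number of queries is $T$, the average number of hits per bump is at most $T 2^{-dj}$, so the average KL is $\lesssim T 2^{-dj} \cdot 2^{2j(d/p-\sigma)}/\eta^2$. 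Imposing Fano's condition that this be $\le \frac{1}{16}\log(2^{dj}) \asymp dj$ pins down the critical resolution $2^j \asymp \big(T/(\eta^2 \log(T/\eta^2))\big)^{1/(2(\sigma-d/p)+d)}$ (up to the log factor, which arises because $\log|\Theta| \asymp j$ itself depends logarithmically on $T$); at that resolution the bump height is $\asymp \big(\eta^2 \log(T/\eta^2)/T\big)^{(\sigma-d/p)/(2(\sigma-d/p)+d)}$, which is the claimed simple-regret rate. Converting Fano's ``testing'' conclusion into a regret lower bound is routine: if no test can identify which bump is present with probability $>7/8$, then for some $\lambda$ the final query $X_T$ misses the support of $\phi_{j,\lambda}$ with constant probability, incurring simple regret $\asymp$ the bump height. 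The cumulative lower bound follows from $M_{C,T} \gtrsim \sum_{t=1}^T M_{S,t}$, i.e.\ summing the per-time simple-regret bound (with horizon $t$) over $t \le T$, which produces the extra factor $T^{d/(2(\sigma-d/p)+d)}$ in the exponent.

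The main obstacle I anticipate is getting the logarithmic factor $\log(T/\eta^2)$ exactly right in both directions. In the lower bound it enters because $\log|\Theta_j|$ is itself of order $j \asymp \log T$, so the Fano threshold is not a clean constant and one must solve $T 2^{-dj} 2^{2j(d/p-\sigma)}/\eta^2 \asymp j$ rather than $\asymp 1$; handling this self-referential relation carefully (e.g.\ by a fixed-point/bootstrapping estimate, or by a slightly coarser two-sided bound on $j$) is where the analysis is delicate. In the upper bound the same log arises from a union bound over the discretization cells in the pure-exploration phase, and one has to check the exploration budget is allocated so that the exploration error and the estimation error balance with matching log powers; I would lean on the precise statement in \citet{bubeck2011x} (or the concurrent \citet{liu2020smooth} for $s>1$, though that is outside the cumulative range here) to avoid reproving it. A secondary technical point is verifying that the perturbed family still lies in $\B_{p,q}^\sigma(\X,L)$ — but this is immediate from the norm computation already carried out in the proof of Lemma~\ref{lemma:Theta_construction}, since each $P_\lambda$ corresponds to a single rescaled basis function with Besov norm exactly $L$.
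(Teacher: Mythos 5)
Your proposal is correct and follows essentially the same route as the paper: upper bounds via an off-the-shelf noisy H\"older-bandit result composed with the Besov-to-H\"older embedding (the paper cites Corollary 1 of \citet{auer2007improved} rather than \citet{bubeck2011x}, but the transfer is identical), and a lower bound via Fano's lemma applied to the disjointly-supported single-wavelet family $\Theta_j$, bounding the average KL by $T 2^{-dj}$ times the per-query information gain and solving the self-referential condition $z2^{-rj}\asymp dj$ for the critical resolution — which is exactly how the paper's explicit choice of $j$ in Eq.~\eqref{eq:choice_of_j} produces the $\log(T/\eta^2)$ factor. The conversion of the Fano testing bound to simple regret and the reduction of the cumulative bounds to Remark~\ref{remark:simple_cumulative} also match the paper.
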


As in the noiseless case, consistent optimization is possible only for $\sigma > d/p$, and, while our lower bounds hold for $\sigma > d/p + 1$, existing upper bounds under cumulative regret appear loose (see Remark~\ref{remark:singularity} and Related Work in Section~\ref{sec:related_work}). Also, as noted in Remark~\ref{remark:simple_cumulative}, lower bounds on minimax cumulative regret and upper bounds on minimax simple regret follow from lower bounds on minimax simple regret and upper bounds on minimax cumulative regret, respectively.

\begin{proof}
For $s \in (0, 1]$, Corollary 1 of~\citet{auer2007improved} implies the upper bound
\[M_{C,T} \left( \C^s(\X, L), \subG \right)
  \lesssim T^{\frac{s + d}{2s + d}} \left( \eta^2 \log \frac{T}{\eta^2} \right)^{-\frac{s}{2s + d}}\]
for the H\"older case. Our upper bound on $M_{C,T} \left( \B_{p,q}^\sigma(\X, L), \subG \right)$ follows from the continuous embedding $\B_{p,q}^\sigma(\X) \hookrightarrow \C^{\sigma - d/p}(\X)$ (Theorem~\ref{thm:besov_embedding}).


For proving a minimax lower bound over sub-Gaussian noise distributions with variance parameter at most $\eta^2$, we may assume homoskedastic Gaussian noise with variance $\eta^2$. Let $\Theta_j$ be as given in Lemma~\ref{lemma:Theta_construction} with $j$ specified in Eq.~\eqref{eq:choice_of_j}. Let $P_0$ denote the distribution of the observations $(Y_1,...,Y_T)$ when the true function is constant $0$ on $\X$, and, for each $f \in \Theta_j$, let $P_f$ denote the distribution of $(Y_1,...,Y_T)$ when the true function is $f$. For $f \in \Theta_j \cup \{0\}$ and each $t \in [T]$, let $P_{0,t}$ denote the conditional distribution of $(Y_i,...,Y_T)|(Y_1,...,Y_{t-1})$.
By a standard formula for KL divergence between two Gaussians and the construction of $\Theta_j$ (specifically part (d) of Lemma~\ref{lemma:Theta_construction}), the maximum information gain $D_j^*$ of a single query is:
\begin{align}
    D_j^*
    \notag
    & := \sup_{x \in \X, t \in [T], f \in \Theta_j} D_{KL} \left( P_{0,t} \| P_{f,t} \right) \\
    & = \max_{f \in \Theta_j} \frac{\|f\|_\infty^2}{2\eta^2}
      = \frac{L^2 \|\phi\|_\infty^2 2^{2j(d/p - \sigma)}}{2\eta^2}
  \label{eq:D_star}
\end{align}
Then, by the chain rule for KL divergence, for any $f \in \Theta_j \cup \{0\}$,
\begin{align*}
  & D_{\text{KL}}(P_0\|P_f) \\
  & = \sum_{t = 1}^T \E_{P_0} \left[ D(P_{0,t}\|P_{f,t}) \middle| Y_1,...,Y_{t-1} \right] \\
  & = \sum_{t = 1}^T \E_{P_0} \left[ D(P_{0,t}\|P_{f,t}) \middle| Y_1,...,Y_{t-1}, f(X_t) > 0 \right] \\
  & \hspace{30pt} \cdot \Pr_{P_0}[f(X_t) > 0|Y_1,...,Y_{t-1}] \\
  & \leq D_j^* \sum_{t = 1}^T \Pr_{P_0}[f(X_i) > 0|Y_1,...,Y_{t-1}].
\end{align*}
Since the elements of $\Theta_j$ have disjoint support,
\begin{align*}
    & \sum_{f \in \Theta_j} \Pr_{P_0}[f(X_T) > 0|Y_1,...,Y_T] \\
    & = \E_{P_0}\left[ \sum_{f \in \Theta_j} 1_{\{f(X_T) > 0\}}|Y_1,...,Y_T \right]
    \leq 1,
\end{align*}
and so, since $|\Theta_j| = 2^{dj}$ (by part (a) of Lemma~\ref{lemma:Theta_construction}),
\begin{align*}
    & \frac{1}{|\Theta_j|} \sum_{f \in \Theta_j} D_{\text{KL}}(P_0\|P_f) \\
    & \leq \frac{1}{|\Theta_j|} \sum_{f \in \Theta_j} D_j^* \sum_{i = 1}^T \Pr_0[f(X_i) > 0|Y_1,...,Y_{i-1}] \\
    & \leq \frac{D_j^* T}{|\Theta_j|}
      = \frac{L^2 \|\phi\|_\infty^2 2^{2j(d/p - \sigma)} T}{2\eta^2 2^{dj}}.
\end{align*}
Defining $r := d + 2(\sigma - d/p) > 0$ and $z := \frac{L^2 \|\phi\|_\infty^2}{2} \frac{T}{\eta^2}$,
\[\frac{1}{|\Theta_j|} \sum_{f \in \Theta_j} D_{\text{KL}}(P_0\|P_f)    
    \leq z 2^{-rj}.\]
Let
\begin{equation}
    j = \frac{1}{r} \log_2 \frac{8 r z}{d \log_2 8 r z}.
    \label{eq:choice_of_j}
\end{equation}
Since $z \to \infty$ as $\frac{T}{\eta^2} \to \infty$, for sufficiently large $\frac{T}{\eta^2}$, we have $\log_2(8rz) \geq 2 d \log_2 \log_2(8rz)$. Therefore,
\begin{align*}
    & \frac{1}{|\Theta_j|} \sum_{f \in \Theta_j} D_{\text{KL}}(P_0\|P_f)
      \leq z 2^{-rj} \\
    & = \frac{d}{8r} \log_2(8rz) \\
    & \leq \frac{d}{16r} \left( \log_2(8rz) - d \log_2 \log_2(8rz) \right) \\
    & = \frac{d}{16r} \log_2 \frac{8r z}{d \log_2(8rz)}
      = \frac{dj}{16} = \frac{\log_2 |\Theta_j|}{16}.
\end{align*}
Therefore, we can apply Fano's Lemma (Lemma~\ref{thm:tsybakov_fano}) to the class $\{P_f : f \in \Theta_j\}$ equipped with the discrete metric $\rho(P, Q) = 1_{\{P \neq Q\}}$, giving
\[\inf_{\hat x, Z} \sup_{f \in \Theta_j} \Pr \left[ \rho(P_f,P_{\hat x, Z})
    \geq \frac{1}{2} \inf_{f, g \in \Theta_j} \rho(P_f,P_g) \right] \geq \frac{1}{8},\]
where $P_{\hat x, Z}$ is any distribution ``estimate'' (i.e., any distribution computed as a function of the data $\{(X_t,Y_t)\}_{t \in [T]}$ and $Z$). In particular, since the elements of $\Theta_j$ have disjoint support, we can construct a distribution estimate $P_{\hat x, Z} := P_f$, where $f$ is the unique element of $\Theta_j$ with $f(X_T) > 0$ (if $f(X_T) = 0$ for all $f \in \Theta_j$, then $f$ can be selected at random). For this particular estimate, we have
\begin{align*}
    & \inf_{\hat x, Z} \sup_{f \in \Theta_j} \Pr \left[ P_f \neq P_{\hat x, Z} \right] \\
    & = \inf_{\hat x, Z} \sup_{f \in \Theta_j} \Pr \left[ \rho(P_f,P_{\hat x, Z}) \geq \frac{1}{2} \inf_{f, g \in \Theta_j} \rho(P_f,P_g) \right] \geq \frac{1}{8}.
\end{align*}
Therefore, since $\Theta_j \subseteq \B_{p,q}^\sigma(\X, L)$,
\begin{align*}
    & M_{S,T} \left( \B_{p,q}^\sigma(\X, L) \right)
    \geq M_{S,T} \left( \Theta_j \right) \\
    & = \inf_{\hat x, Z} \sup_{f \in \Theta_j} \E_f \left[ \sup_{x \in \X} f(x) - f(X_n) \right] \\
    & \geq \inf_{\hat x, Z} \sup_{f \in \Theta_j} \E_f \left[ L \phi^* 2^{j(d/p - \sigma)} 1_{\{f(X_n) = 0\}} \right] \\
    & = L \phi^* 2^{j(d/p - \sigma)} \inf_{\hat x, Z} \sup_{f \in \Theta_j} \Pr \left[ P_f \neq P_{\hat x, Z} \right] \\
    & \geq \frac{L \phi^* 2^{j(d/p - \sigma)}}{8}
\end{align*}
Plugging in $j$ from Equation~\ref{eq:choice_of_j} gives the desired result:
\begin{align*}
    & M_{S,T} \left( \B_{p,q}^\sigma(\X, L) \right) \\
    & \qquad \geq \frac{L \phi^*}{8} \left( \frac{4 r L^2 \|\phi\|_\infty^2 \frac{T}{\eta^2}}{d \log_2 \left( 4 r L^2 \|\phi\|_\infty^2 \frac{T}{\eta^2} \right)} \right)^{\frac{d/p - \sigma}{2(\sigma - d/p) + d}},
\end{align*}
where $r = d + 2\sigma - 2d/p > 0$ is a constant.
\end{proof}

To illustrate results of Theorem~\ref{thm:noisy_rates}, Figure~\ref{fig:rate_phase_diagram} shows a phase diagram of minimax simple regret rates, as a function of parameters $\sigma$ and $p$ of the Besov space in which the objective function is assumed to lie. The figure highlights the main conclusion of our paper: minimax rates of bandit optimization over a Besov space correspond precisely to rates over the smallest H\"older space within which that Besov space embeds.

Comparing with the noiseless simple regret rate $T^{1/p - \sigma/d}$ from Theorem~\ref{thm:noiseless_rates}, up to logarithmic factors, the noiseless rate dominates for $\eta \lesssim T^{1/p - \sigma/d}$, while the noisy rate dominates for $\eta \gtrsim T^{1/p - \sigma/d}$. This threshold matches the convergence rate in the noiseless case; a natural interpretation is that the optimal error decreases at the fast noiseless rate $T^{1/p - \sigma/d}$ until it is of the same order as the noise $\eta$, at which point convergence slows to the noisy rate $T^{\frac{d + \sigma - d/p}{2(\sigma - d/p) + d}} \eta^{\frac{2\sigma - 2d/p}{2(\sigma - d/p) + d}}$.

\begin{figure}
    \centering
    \includegraphics[width=\linewidth]{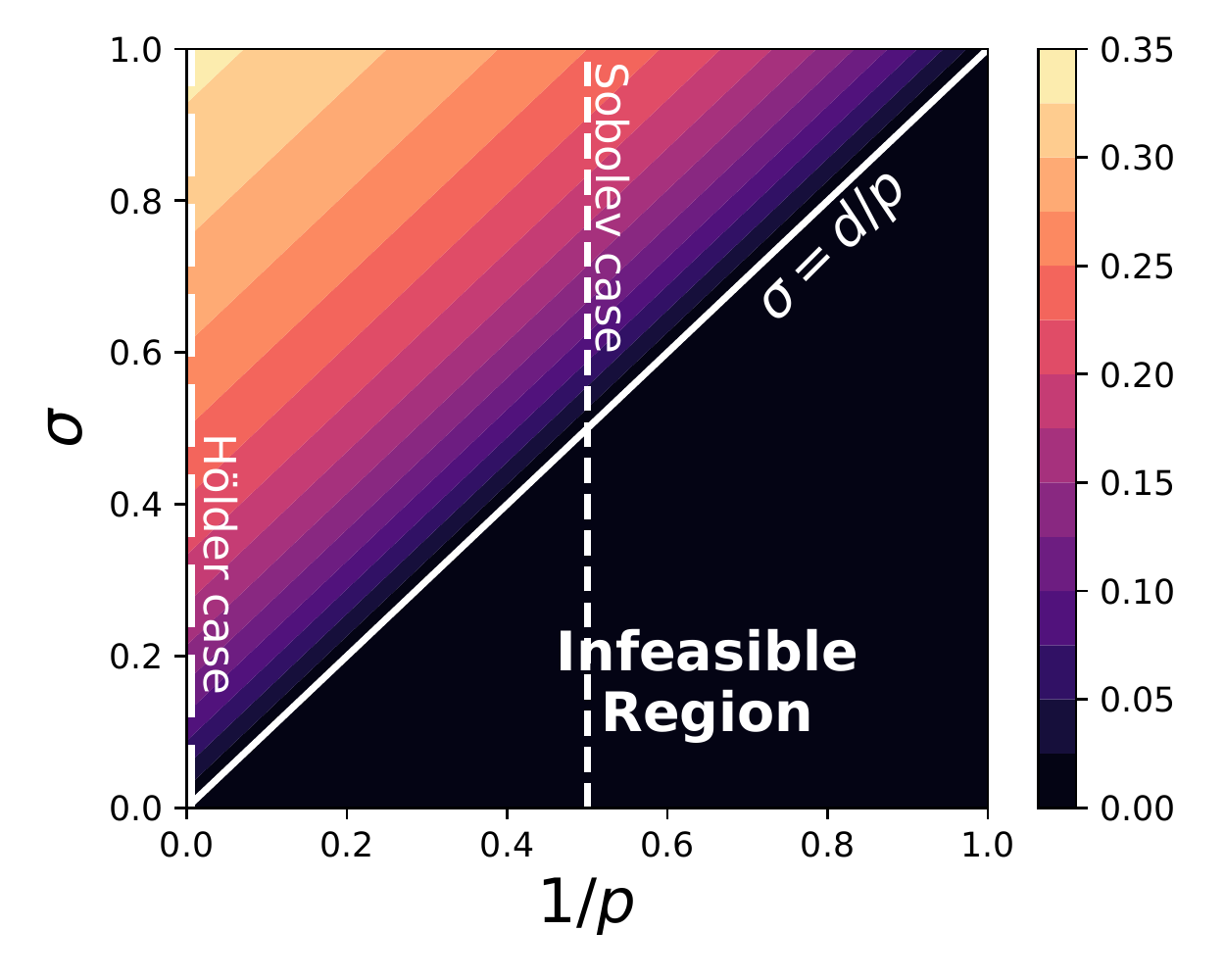}
    \caption{Phase diagram showing exponent of minimax simple regret rate in the noisy case (i.e., $\alpha$ such that $M_{S,T} \left( \B_{p,q}^\sigma(\X, L) \right) \asymp T^{-\alpha}$), plotted as a function of the Besov space parameters $\sigma$ and $1/p$, in dimension $d = 1$. Dashed lines indicate H\"older ($p = \infty$) and (Hilbert-)Sobolev ($p = 2$) special cases studied in prior work. The contours, corresponding to level sets of $\sigma - d/p$, demonstrate the relationship between Besov spaces and the embedding H\"older space.}
    \label{fig:rate_phase_diagram}
\end{figure}

\section{Conclusion}
\label{sec:conclusion}

In this paper, we derived novel lower bounds on minimax simple and cumulative regrets over general Besov spaces, in both the presence and absence of noise.
Via continuous embedding of Besov spaces in H\"older spaces, these rates match existing upper bounds, implying optimality of existing proposed algorithms and identifying optimal rates over all Besov spaces.

Our results suggest that, up to constant factors, $0^{th}$-order methods cannot benefit from the additional $d/p$ orders of smoothness in $\B_{p,q}^{s + d/p}(\X)$ over $\C^s(\X)$.
From a theoretical perspective, this suggests that H\"older spaces are ``natual'' spaces over which to study continuum-armed bandit optimization, since, optimal algorithms over H\"older spaces are automatically optimal over other Besov spaces, while the converse may not hold.
From a practical perspective, our results suggest there may be little (at most constant-factor) benefit to employing spatially adaptive methods (i.e., those that more finely represent areas of the search space where the objective is less smooth), contrasting from results in regression and other nonparametric estimation problems, where spatial adaptivity is needed to obtain minimax rates~\citep{donoho1998minimax}.

Finally, to the best of our knowledge, the results of this paper are the first to consider modeling continuum-armed bandits using wavelets. Future work should explore efficient bandit algorithms based on fast wavelet methods, which can provide significant computational advantages over kernel methods~\citep{mallat2008wavelet}.


\subsubsection*{Acknowledgements}
The author would like to thank Ananya Uppal and Arkady Epshteyn for helpful insights and discussions, and Sivaraman Balakrishnan for pointing out an error in an earlier version of the paper.

\bibliographystyle{plainnat}
\bibliography{main}

\end{document}